\newtheorem{theorem}{Theorem}
\newtheorem{conjecture}{Conjecture}
\newtheorem{definition}{Definition}
\title{Collapsing Bandits and Their Application to Public Health Interventions
}
\author{%
  Aditya Mate\thanks{equal contribution.} \\
  Harvard University\\
  Cambridge, MA, 02138 \\
  \texttt{aditya\_mate@g.harvard.edu} \\
   \And
   Jackson A. Killian$^*$ \\
  Harvard University\\
  Cambridge, MA, 02138 \\
  \texttt{jkillian@g.harvard.edu} \\
  \And
  Haifeng Xu \\
  University of Virginia\\
  Charlottesville, VA, 22903 \\
  \texttt{hx4ad@virginia.edu} \\
  \And
  Andrew Perrault \\
  Harvard University\\
  Cambridge, MA, 02138 \\
  \texttt{aperrault@g.harvard.edu} \\
  \And
  Milind Tambe \\
  Harvard University\\
  Cambridge, MA, 02138 \\
  \texttt{milind\_tambe@harvard.edu} \\
}
\begin{document}

\maketitle

\begin{abstract}
We propose and study Collapsing Bandits, a new restless multi-armed bandit (RMAB) setting in which each arm follows a binary-state Markovian process with a special structure: when an arm is played, the state is fully observed, thus ``collapsing'' any uncertainty, but when an arm is passive, no observation is made, thus allowing uncertainty to evolve. The goal is to keep as many arms in the ``good'' state as possible by planning a limited budget of actions per round. Such Collapsing Bandits are natural models for many healthcare domains in which health workers must simultaneously monitor patients \textit{and} deliver interventions in a way that maximizes the health of their patient cohort. Our main contributions are as follows: (i) Building on the Whittle index technique for RMABs, we derive conditions under which the Collapsing Bandits problem is \textit{indexable}.
Our derivation hinges on novel conditions that characterize when the optimal policies may take the form of either ``forward'' or ``reverse'' threshold policies. (ii) We exploit the optimality of threshold policies to build fast algorithms for computing the Whittle index, including a closed form. (iii) We evaluate our algorithm on several data distributions including data from a real-world healthcare task in which a worker must monitor and deliver interventions to maximize their patients' adherence to tuberculosis medication. Our algorithm achieves a 3-order-of-magnitude speedup compared to state-of-the-art RMAB techniques, while achieving similar performance. 
\end{abstract}

\section{Introduction}
\textbf{Motivation. } This paper considers scheduling problems in which a planner must act on $k$ out of $N$ binary-state processes each round. The planner fully observes the state of the processes on which she acts, then all processes undergo an action-dependent Markovian state transition; the state of the process is unobserved until it is acted upon again, resulting in uncertainty. The planner's goal is to maximize  the number of processes that are in some ``good'' state over the course of $T$ rounds. This class of problems is natural in the context of \textit{monitoring tasks} which arise in many domains such as sensor/machine maintenance \cite{iannello2012optimality,glazebrook2006some,abbou2019group,villar2016indexability}, anti-poaching patrols \cite{qian2016restless}, and especially healthcare. For example, nurses or community health workers are employed to monitor and improve the adherence of patient cohorts to medications for diseases like diabetes \cite{newman2018community}, hypertension \cite{brownstein2007effectiveness}, tuberculosis \cite{rahedi2014effects,chang2013house} and HIV \cite{kenya2013using,kenya2011can}. Their goal is to keep patients adherent (i.e., in the ``good'' state) but a health worker can only intervene on (visit) a limited number of patients each day. Health workers can play a similar role in monitoring and delivering interventions for patient mental health, e.g., in the context of depression \cite{lowe2004monitoring,mundorf2018reducing} or Alzheimer's Disease \cite{lin2018selective}.

We adopt the solution framework of \textit{Restless Multi-Arm Bandits} (RMABs), a generalization of Multi-Arm Bandits (MABs) in which a planner may act on $k$ out of $N$ arms each round that each follow a Markov Decision Process (MDP). Solving an RMAB is PSPACE-hard in general \cite{papadimitriou1999complexity}. Therefore, a common approach is to consider the Lagrangian relaxation of the problem in which the $\frac{k}{N}$ budget constraint is dualized. Solving the relaxed problem gives Lagrange multipliers which act as a greedy index heuristic, known as the Whittle index, for the original problem.
The Whittle index approach has been shown to be asymptotically optimal (i.e., $N\rightarrow{}\infty$ with fixed $\frac{k}{N}$) \cite{weber1990index} and performs well empirically \cite{ansell2003whittle} making it a common solution technique for RMABs.

Critically, using the Whittle index approach requires two key components: (i) a fast method for computing the index and (ii) proving the problem satisfies a condition known as \textit{indexability}. Without (i) the approach can be prohibitively slow, and without (ii) performance guarantees are sacrificed. Neither (i) nor (ii) are known for general RMABs. 
Therefore, to capture the scheduling problems addressed in this work, we introduce a new subclass of RMABs, \textit{Collapsing Bandits}, distinguished by the following feature: when an arm is played, the agent fully observes its state, ``collapsing'' any uncertainty, but when an arm is passive, no observation is made and uncertainty evolves. We show that this RMAB subclass is more general than previous models and leads to new theoretical results, including conditions under which the problem is indexable and under which optimal policies follow one of two simple threshold types. We use these results to develop algorithms for quickly computing the Whittle index. In experiments, we analyze the algorithms' performance on (i) data from a real-world healthcare scheduling task in which our approach ties state-of-the-art performance at a fraction the runtime and (ii) various synthetic distributions, some of which the algorithm achieves performance comparable to the state of the art even outside its optimality conditions.



To summarize, our contributions are as follows:
(i) We introduce a new subclass of RMABs, Collapsing Bandits, (ii) Derive theoretical conditions for Whittle indexability and for the optimal policy to be threshold-type, and (iii) Develop an efficient solution that achieves a 3-order-of-magnitude speedup compared to more general state-of-the-art RMAB techniques, without sacrificing performance.


\section{Restless Multi-Armed Bandits}
\label{section:preliminaries}
An RMAB consists of a set of $N$ arms, each associated with a \emph{two-action} MDP \cite{puterman2014markov}. An MDP $\{ \mathcal{S}, \mathcal{A}, r, P\}$ consists of a set of states $\mathcal{S}$, a set of actions $\mathcal{A}$, a state-dependent reward function $r: \mathcal{S} \rightarrow \mathbb{R}$, and a transition function $P$, where $P^a_{s,s^\prime}$ denotes the probability of transitioning from state $s$ to $s^\prime$ when action $a$ is taken.  An MDP \emph{policy} $\pi: \mathcal{S} \rightarrow \mathcal{A}$ represents a choice of action to take at each state. We will consider both discounted and average reward criteria. 
The long-term \emph{discounted reward} starting from state $s_0 = s$ is defined as $R_{\beta}^\pi(s) = E\left[\sum_{t=0}^\infty \beta^tr(\pi(s_t))|\pi, s_0 = s\right]$ where $\beta \in [0,1)$ is the discount factor and actions are selected using $\pi$. To define average reward, let $f^\pi(s): \mathcal{S} \rightarrow [0,1]$ denote the \emph{occupancy frequency} induced by policy $\pi$, i.e., the fraction of time spent in each state of the MDP. The \emph{average reward} $\overline{R}^\pi$ of policy $\pi$ be defined as the expected reward computed over the occupancy frequency: $\overline{R}^\pi = \sum_{s \in \mathcal{S}} f^\pi(s) r(s)$.

Each arm in an RMAB is an MDP with the action set $\mathcal{A}=\{0,1\}$. Action $1$ ($0$) is called the \emph{active} (\emph{passive}) action and denotes the arm being pulled (not pulled).  The agent can pull at most $k$ arms at each time step. The agent's goal is to maximize either her discounted or average reward across the arms over time. 
Some RMAB problems need to account for partial observability of states. It is sufficient to let the MDP state be the \emph{belief state}: the probability of being in each latent state \citep{kaelbling1998planning}. While intractable in general due to infinite number of reachable belief states, most partially observable RMABs studied (including our Collapsing Bandits) have polynomially many belief states due to a finite time horizon or other structures.

\textbf{Related work} 
RMABs have been an attractive framework for studying various stochastic scheduling problems since Whittle indices were introduced \cite{whittle1988restless}. Because general RMABs are PSPACE-hard \cite{papadimitriou1999complexity}, RMAB studies usually consider restricted classes under which some performance guarantees can be derived. Collapsing Bandits form one such novel class that generalizes some existing results which we note in later sections.
\citet{zhao_liyu_paper} develop an efficient Whittle index policy for a 2-state partially observable RMAB subclass in which the state transitions are unaffected by the actions taken and reward is accrued from the active arms only.
\citet{akbarzadeh2019restless} define a class of bandits with ``controlled restarts,'' giving indexability results and a method for computing the Whittle index. However, ``controlled restarts'' define the active action as state independent, a stronger assumption than Collapsing Bandits which allow state-dependent action effects.
\citet{glazebrook2006some} give Whittle indexability results for three classes of restless bandits: (1) A machine maintenance regime with deterministic active action effect (we consider stochastic active action effect) (2) A switching regime in which the passive action freezes state transitions (in our setting, states always change regardless of action) (3) A reward depletion/replenishment bandit which deterministically resets to a start state on passive action (we consider stochastic passive action effect).
\citet{AOI_paper1} and \citet{sombabu_paper} augment the machine maintenance problem from \citet{glazebrook2006some} to include either i.i.d.~or Markovian evolving probabilities of an active action having no effect, a limited form of state-dependent action.
\citet{meshram2018whittle} introduce Hidden Markov Bandits which, similar to our approach, consider binary state transitions under partial observability, but do not allow for state dependent rewards on passive arms. In sum, our Collapsing Bandits introduce a new, more general RMAB formulation than special subclasses previously considered. \citet{qian2016restless} present a generic approach for any indexable RMAB based on solving the (partially observable) MDPs on arms directly. Because we derive a closed form for the Whittle index, our algorithm is orders of magnitude faster.

\section{Collapsing Bandits}
\label{section:problem_formulation}
We introduce \emph{Collapsing Bandits} (CoB) as a specially structured RMAB with partial observability. In CoB, each arm $n\in\{1,\ldots,N\}$ has binary latent states $\mathcal{S}=\{0, 1\}$, representing \emph{bad} and \emph{good} state, respectively. The agent acts during each of finite days $t \in 1, \ldots, T$.
Let $a_t \in \{0,1\}^N$ denote the vector of actions taken by the agent on day $t$. Arm $n$ is said to be \emph{active} at $t$ if $a_t(n)=1$ and \emph{passive} otherwise. The agent acts on $k$ arms per day, i.e., $\left\|a_t\right\| = k$, where $k \ll N$ because resources are limited. When acting on arm $n$, the true latent state of $n$ is fully observed by the agent and thus its uncertainty ``collapses'' to a realization of the binary latent state. We denote this observation as $\omega \in \mathcal{S}$. States of passive arms are completely unobservable by the agent.

Active arms transition according to the \emph{transition matrix} ${P}_{s,s'}^{a,n}$ and passive arms transition according to $P_{s,s'}^{p,n}$. We drop the superscript $n$ when there is no ambiguity. Our scheduling problem, like many problems in analogous domains, exhibits the following natural structure: (i) processes are more likely to stay ``good'' than change from ``bad'' to ``good''; (ii) when acted on, they tend to improve. These natural structures are respectively captured by imposing the following constraints on $P^p$ and $P^a$ for each arm: (i)~$P_{0,1}^p < P_{1,1}^p$ and $P_{0,1}^a < P_{1,1}^a$; (ii)~$P_{0,1}^{p} < P_{0,1}^{a}$ and $P_{1,1}^{p} < P_{1,1}^{a}$. To avoid unnecessary complication through edge cases, all transition probabilities are assumed to be nonzero.
The agent receives reward $r_t = \sum_{n=1}^{N}s_t(n)$ at $t$, where  $s_t(n)$ is the latent state of arm $n$ at $t$.
The agent's goal is to maximize the long term rewards, either discounted or average, defined in Sec.~\ref{section:preliminaries}.


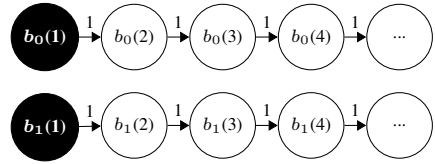
\begin{wrapfigure}{r}{0.45\textwidth}
\centering
\resizebox{!}{60pt}{%
\begin{tikzpicture}[
    -Triangle, every loop/.append style = {-Triangle},
    start chain=main going right,
    state/.style={circle,minimum size=9mm, draw},
      node distance=3mm,
      font=\scriptsize,
      >=stealth,
      auto
    ]
  \node [state, on chain, fill=black, text=white] (1) {\textbf{\boldmath $b_0$(1)}};
   {[start branch=A going below]
   \node[state, on chain, fill=black, text=white]  (A) {\textbf{\boldmath $b_1$(1)}};
   }
   \node [state, on chain] (2) {$b_0$(2)};
   {[start branch=A going below]
   \node[state, on chain]  (B) {$b_1$(2)};
   }
    \node [state, on chain] (3) {$b_0$(3)};
   {[start branch=A going below]
   \node[state, on chain]  (C) {$b_1$(3)};
   }
  \node[state, on chain]  (4) {$b_0$(4)};
  {[start branch=A going below]
   \node[state, on chain]  (D) {$b_1$(4)};
   }
   \node[state, on chain]  (5) {...};
  {[start branch=A going below]
   \node[state, on chain]  (E) {...};
   }

 \path[]
    (1) edge node[above] {1} (2)
    (2) edge node[above] {1} (3)
    (3) edge node[above] {1} (4)
    (4) edge node[above] {1} (5);

\path[]
    (A) edge node[above] {1} (B)
    (B) edge node[above] {1} (C)
    (C) edge node[above] {1} (D)
    (D) edge node[above] {1} (E);
  
\end{tikzpicture}
}
\caption{Belief-state MDP under the policy of always being passive. There is one chain for each observation $\omega \in \{0,1\}$ with the head marked black. Belief states deterministically transition down the chains.}
\label{fig:bsMDP}
\end{wrapfigure}

\paragraph{Belief-State MDP Representation}
In limited observability settings, belief-state MDPs have organized chain-like structures, which we will exploit. In particular, the only information that affects our belief of an arm being in state $1$ is the number of days since that arm was last pulled and the state $\omega$ observed at that time. Therefore, we can arrange these belief states into two ``chains'' of length $T$, each for an observation $\omega$. A sketch of the belief state chains under the passive action is shown in Fig.~\ref{fig:bsMDP}. Let $b_\omega(u)$ denote the belief state, \emph{i.e., the probability that the state is $1$}, if the agent received observation $\omega \in \{ 0,1\} $ when it acted on the process $u$ days ago. Note that $b_\omega(u)$ is also the expected reward associated with that belief state, and let $\mathcal{B}$ be the set of all belief states.

When the belief-state MDP is allowed to evolve under some policy, the following mechanism arises: first, after an action, the state $\omega$ is observed (uncertainty ``collapses''), then one round passes causing the agent's belief to become $P_{\omega,1}^a$, representing the head of the chain determined by $\omega$. Subsequent passive actions cause the process to transition deterministically down the same chain (though, the transition in the latent state is still stochastic). Then when the process's arm is active, it transitions to the head of one of the chains with probability equal to the belief that the corresponding observation would be emitted (see Fig.~\ref{fig:chains} for an illustration).

The belief associated with a belief state can be calculated in closed form with the given transition probabilities. Formally,
\begin{small}
\begin{align}
\label{eq:tau_maintext}
b_{\omega}(u) = \tau_{u-1}(P_{\omega,1}^a) \text{   } \forall u \in [T]\hspace{1mm}\text{where}\hspace{1mm}
    \tau_u (b)= \frac{P_{0,1}^p - (P_{1,1}^p-P_{0,1}^p)^u(P_{0,1}^p- b(1+P_{0,1}^p-P_{1,1}^p))}{(1+P_{0,1}^p-P_{1,1}^p)}
\end{align}
\end{small}



\section{Collapsing Bandits: Threshold Policies and Whittle Indexability}
\label{section_computing_whittle_index}


Because of the well-known intractability of solving general RMABs, the widely adopted solution concept in the literature of RMABs is the Whittle index approach; for a comprehensive description, see \citet{whittle1988restless}. Intuitively, the Whittle index captures the value of acting on an arm in a particular state by finding the minimum \emph{subsidy} $m$ the agent would accept to \textit{not act}, where the subsidy is some exogenous ``donation'' of reward. Formally, the modified reward function becomes $r_m: \mathcal{S}\times\mathcal{A} \rightarrow \mathbb{R}$, where $r_m(s,0) = r(s) + m$ and $r_m(s,1) = r(s)$. Let $R_{\beta,m}^\pi(s) = E\left[\sum_{t=0}^\infty \beta^tr_m(s_t,\pi(s_t))|\pi, s_0 = s\right]$ and $\overline{R}^\pi_m = \sum_{s \in \mathcal{S}} f^\pi(s) r_m(s,\pi(s))$ be the discounted and average reward criteria for this new subsidy setting, respectively. The former is maximized by the discounted value function (we give a value function for the average reward criterion in \textbf{Fast Whittle Index Computation}):
\begin{equation}
    \label{eq:discounted_value_fn_definition}
    \begin{split}
        V_m(b) = \max
        \begin{cases}
        m+b+\beta V_m(\tau_1(b)) & \text{passive} \\
        b + \beta(bV_m(P_{1,1}^a) + (1-b)V_m(P_{0,1}^a)) & \text{active}
        \end{cases}
    \end{split}
\end{equation}
where $\tau$ is defined in Eq.~\ref{eq:tau_maintext} and $b$ is shorthand for $b_\omega(u)$. In a CoB, the Whittle index of a belief state $b$ is the smallest $m$ s.t.~it is equally optimal to be active or passive in the current state. Formally:
\begin{align}\label{whittle_subproblem}
W(b) = \inf_m\{m : V_{m}(b;a=0) \ge V_m(b;a=1)\}
\end{align} Critically, performance guarantees hold only if the problem satisfies \emph{indexability} \cite{weber1990index,whittle1988restless}, a condition which says that for all states, the optimal action cannot switch to active as $m$ increases. Let $\Pi^*_m$ be the set of policies that maximize a given reward criterion under subsidy $m$.
\begin{definition}[Indexability]
\label{def:indexability}
An arm is indexable if $\mathcal{B}^*(m) = \{b : \forall\pi~\in~\Pi^*_m, \pi(b)=0\}$ monotonically  increases from $\emptyset$ to the entire state space as $m$ increases from $-\infty$ to $\infty$. An RMAB is indexable if every arm is indexable.
\end{definition}
The following special type of MDP policy is central to our analysis. 
\begin{definition}[Threshold Policies] 
A policy is a \emph{forward (reverse) threshold policy} if there exists a threshold $b_{th}$ such that $\pi(b) = 0$ ($\pi(b) = 1$) if $b>b_{th}$ and $ \pi(b)=1$ ($\pi(b)=0$) otherwise.
\label{def:threshold_pols}
\end{definition}

\begin{theorem}\label{thm:indexability_maintext}
If for each arm and any subsidy $m \in \mathbb{R}$, there exists an optimal policy that is a forward or reverse threshold policy, the Collapsing Bandit is indexable under discounted and average reward criteria.
\end{theorem}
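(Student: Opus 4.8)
The plan is to reduce indexability to a single-crossing property of the passive-minus-active advantage and then to use threshold optimality to establish that crossing. Define the advantage $D_m(b) = V_m(b;a=0) - V_m(b;a=1)$, so that $\mathcal{B}^*(m) = \{b : D_m(b) > 0\}$ is exactly the set of beliefs at which passive is the unique optimal action. By Definition~\ref{def:indexability} it suffices to show that for every fixed $b$ the set $\{m : D_m(b) > 0\}$ is an up-set of the form $(W(b),\infty)$, i.e.\ that $D_m(b)$ changes sign at most once, from negative to positive, as $m$ grows; monotone growth of $\mathcal{B}^*(m)$ from $\emptyset$ to $\mathcal{B}$ then follows. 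The two boundary regimes are immediate from \eqref{eq:discounted_value_fn_definition}: writing $D_m(b) = m + \beta\bigl(V_m(\tau_1(b)) - bV_m(P^a_{1,1}) - (1-b)V_m(P^a_{0,1})\bigr)$, observe that as $m\to\pm\infty$ every optimal policy becomes all-passive (resp.\ all-active), so the three value terms each grow like $m/(1-\beta)$ and that growth cancels inside the bracket, leaving $D_m(b) = m + O(1) \to \pm\infty$. Hence passive is optimal everywhere for $m$ large and nowhere for $m$ small.

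The core is the single-crossing claim, for which I would take increments of $D_m$ in $m$. Because the subsidy enters only the passive reward, $V_m(b)$ is non-decreasing, convex and piecewise linear in $m$, with slope $\Phi_m(b)$, the expected discounted number of passive steps taken by the optimal policy started at $b$, where $\Phi_m(b)\in[0,\tfrac{1}{1-\beta}]$. Differentiating the display above gives $\tfrac{d}{dm}D_m(b) = 1 + \beta\Phi_m(\tau_1(b)) - \beta\bigl(b\Phi_m(P^a_{1,1}) + (1-b)\Phi_m(P^a_{0,1})\bigr)$, and single-crossing from below follows once this derivative is shown positive at every crossover belief (where $D_m(b)=0$). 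A crude bound does not suffice: since the $\Phi$ terms only lie in $[0,\tfrac{1}{1-\beta}]$, the right-hand side could a priori be as negative as $1-\tfrac{\beta}{1-\beta}$, so $D_m$ need not be globally monotone and the threshold hypothesis must do real work here.

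This is the step I expect to be the main obstacle, and it is where I would exploit the chain geometry together with threshold optimality. The point is that the passive action at $b=b_\omega(u)$ advances one step down the same chain to $\tau_1(b)=b_\omega(u+1)$, whereas the active action resets to the chain heads $P^a_{1,1}$ and $P^a_{0,1}$; under a forward (reverse) threshold optimal policy the passive times $\Phi_m$ are monotone along each chain and ordered across the two chains, which lets me compare ``passive-now-then-optimal'' against ``active-now-then-optimal'' and show the former accrues at least as much future passive time at any belief where the two actions are value-equal. Concretely, at a crossover the two actions are value-indifferent but the passive action keeps the process in the passive region for at least one more step, so $1+\beta\Phi_m(\tau_1(b))$ dominates $\beta\bigl(b\Phi_m(P^a_{1,1})+(1-b)\Phi_m(P^a_{0,1})\bigr)$ and the derivative is positive. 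I would run this ordering argument separately for the forward- and reverse-threshold cases; a useful simplifying observation is that the threshold type is fixed by the arm's transition parameters (via the conditions of the preceding section) and does not flip as $m$ varies, which prevents $\mathcal{B}^*(m)$ from discontinuously switching between the high-belief and low-belief ends of the chains.

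Finally, I would transfer the result from the discounted to the average-reward criterion by a vanishing-discount argument, letting $\beta\to 1$ so that the discounted value functions and the per-state indifference thresholds converge to their average-reward counterparts (equivalently, re-running the same advantage computation with the average-reward value function and occupancy frequencies, with $\Phi_m(b)$ replaced by the stationary passive frequency in $[0,1]$). Since single-crossing is preserved in this limit, indexability holds under both criteria.
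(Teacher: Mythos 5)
Your reduction of indexability to a single-crossing property of the advantage $D_m(b)=V_m(b;a=0)-V_m(b;a=1)$, your boundary analysis as $m\to\pm\infty$, and your derivative formula $\tfrac{d}{dm}D_m(b)=1+\beta\Phi_m(\tau_1(b))-\beta\bigl(b\Phi_m(P^a_{1,1})+(1-b)\Phi_m(P^a_{0,1})\bigr)$ all correctly isolate the quantity that must be shown positive at crossing points; this is the same obstacle the paper identifies (monotonicity of the threshold in $m$). The gap is that your argument for the key inequality rests on a claim that is false in the central case. You assert that ``at a crossover the passive action keeps the process in the passive region for at least one more step.'' For a \emph{forward} threshold policy (passive at beliefs \emph{above} the threshold, per Definition~\ref{def:threshold_pols}) on a non-increasing-belief process --- exactly the regime targeted by Theorem~\ref{thm:forward_threshold_opt} and Algorithm~\ref{alg:algo1} --- this fails: at the crossover belief $b^*$, non-increasing belief gives $\tau_1(b^*)\le b^*$, so one passive step moves the belief \emph{below} the threshold, into the active region. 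Moreover, even in the cases where your claim does hold (e.g., reverse threshold with non-increasing belief), it only yields $\Phi_m(\tau_1(b))\ge 1$, so the passive-first side is at least $1+\beta$, while the active-first side can a priori be as large as $\tfrac{\beta}{1-\beta}$; one guaranteed extra passive step is insufficient whenever $\beta>\tfrac{\sqrt{5}-1}{2}$. So the ordering of discounted passive times, which is the entire content of the theorem, remains unproven.

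This is precisely the step the paper's appendix proof is built to handle: after the same linearity-in-$m$ reduction, the authors note that establishing monotonicity of the threshold in $m$ is a well-known hard problem that resists local, one-step reasoning, and they resolve it with a global induction exploiting the finiteness of $\mathcal{B}$ together with limit arguments from real analysis. Your proposal correctly flags that crude bounds on $\Phi_m\in[0,\tfrac{1}{1-\beta}]$ cannot work, but then substitutes a one-step geometric claim that is false where it matters most. Two further caveats: your remark that the threshold \emph{type} is pinned down by the transition parameters imports the hypotheses of Theorems~\ref{thm:forward_threshold_opt} and~\ref{thm:reverse_threshold_opt}, which are not part of this theorem's assumption (the assumption only provides, for each $m$ separately, some optimal policy of one of the two types); and the average-reward case needs an actual vanishing-discount argument (the paper invokes Dutta's theorem), not merely the assertion that single-crossing survives the limit $\beta\to 1$.
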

\begin{proof}[Proof Sketch]

Using linearity of the value function in subsidy $m$ for any fixed policy, we first argue that when forward (reverse) threshold policies are optimal, proving indexability reduces to showing that the threshold monotonically decreases (increases) with $m$. Unfortunately, establishing such a monotonic relationship between the threshold and $m$ is a well-known challenging task in the literature that often involves problem-specific reasoning \cite{zhao_liyu_paper}. Our proof features a sophisticated induction argument exploiting the finite size of $\mathcal{B}$ and relies on tools from real analysis for limit arguments.

\end{proof}
All formal proofs can be found in the appendix. We remark that Thm.~\ref{thm:indexability_maintext} generalizes the result in the seminal work by \citet{zhao_liyu_paper} who proved the indexability for a special class of CoB. In particular, the RMAB in \citet{zhao_liyu_paper} can be viewed as a CoB setting with $P^a = P^p$, i.e., transitions are independent of actions.

Though the Whittle index is known to be challenging to compute in general \cite{whittle1988restless}, we are able to design an algorithm that computes the Whittle index efficiently assuming the  optimality of threshold policies, which we now describe.

\paragraph{Fast Whittle Index Computation}
\label{section:algorithm}

The main algorithmic idea we use is the Markov chain structure that arises from imposing a \textit{forward} threshold policy on an MDP. A forward threshold policy can be defined by a tuple of the first belief state in each chain that is less than or equal to some belief threshold $b_{th} \in [0, 1]$. In the two-observation setting we consider, this is a tuple $(X_0^{b_{th}}, X_1^{b_{th}})$, where $X_\omega^{b_{th}} \in 1,\ldots,T$ is the index of the first belief state in each chain where it is optimal to act (i.e., the belief is less than or equal to $b_{th}$). We now drop the superscript $b_{th}$ for ease of exposition. See Fig.~\ref{fig:chains} for a visualization of the transitions induced by such an example policy. For a forward threshold policy $(X_0, X_1)$, the occupancy frequencies induced for each state $b_\omega(u)$ are:
\begin{align}
f^{(X_0,X_1)}(b_\omega(u))=
\begin{cases}
\alpha & \textrm{if }\omega=0,u \le X_0 \\
\beta  & \textrm{if }\omega=1,u \le X_1 \\
0 & \textrm{otherwise}
\end{cases} \\
\alpha = \bigg(\frac{(X_1 b_0(X_0))}{1-b_1(X_1)} + X_0\bigg)^{-1} \text{,   } \beta = \bigg(\frac{X_1 b_0(X_0)}{1-b_1(X_1)} + X_0\bigg)^{-1}  \frac{b_0(X_0)}{1-b_1(X_1)}
\end{align}
These equations are derived from standard Markov chain theory. These occupancy frequencies do not depend on the subsidy. Let $J_m^{(X_0,X_1)}$ be the average reward of policy $(X_0,X_1)$ under subsidy $m$. We decompose the average reward into the contribution of the state reward and the subsidy
\begin{align}\label{eqn:Ravg}
    J_m^{(X_0,X_1)} = \sum_{b\in \mathcal{B}}bf^{(X_0,X_1)}(b) +
    m(1-f^{(X_0,X_1)}(b_1(X_1))-f^{(X_0,X_1)}(b_0(X_0)))
 \end{align}
Recall that for any belief state $b_\omega(u)$, the Whittle index is the smallest $m$ for which the active and passive actions are both optimal. Given forward threshold optimality, this translates to two corresponding threshold policies being equally optimal. Such policies must have adjacent belief states as thresholds, as can be concluded from Lemma 1 
in Appendix A.
Note that for a belief state $b_0(X_0)$ the only adjacent threshold policies with active and passive as optimal actions at $b_0(X_0)$ are $(X_0,X_1)$ and $(X_0+1,X_1)$ respectively. Thus the subsidy which makes these two policies equal in value must thus be the Whittle Index for $b_0(X_0)$, which we obtain by solving:  $J_m^{(X_0, X_1)} = J_m^{(X_0+1, X_1)}$ for $m$. We use this idea to construct two fast Whittle index algorithms. 

\begin{figure}[t!]
\centering
\subfloat[]{
\resizebox{!}{72pt}{%
\begin{tikzpicture}[
    -Triangle, every loop/.append style = {-Triangle},
    start chain=main going right,
    state/.style={circle,minimum size=9mm,draw},
      node distance=6mm,
      font=\scriptsize,
      >=stealth,
      bend angle=28,
      auto
    ]
  \node [state, on chain, fill=black, text=white] (1) {\textbf{\boldmath $b_0$(1)}};
   {[start branch=A going below]
   \node[state, on chain, fill=black, text=white]  (A) {\textbf{\boldmath$b_1$(1)}};
   }
   \node [state, on chain] (2) {$b_0$(2)};
   {[start branch=A going below]
   \node[state, on chain]  (B) {$b_1$(2)};
   }
    \node [state, on chain] (3) {$b_0$(3)};
   {[start branch=A going below]
   \node[state, on chain, fill=lightgray]  (C) {$b_1$(3)};
   }
  \node[state, on chain, fill=lightgray]  (4) {$b_0$(4)};
  {[start branch=A going below]
   \node[state, on chain]  (D) {$b_1$(4)};
   }
   \node[state, on chain]  (5) {...};
  {[start branch=A going below]
   \node[state, on chain]  (E) {...};
   }

  \foreach \i in {1,...,2} {
     \draw let \n1 = { int(\i+1) } in
      (\i)  edge[] (\n1);
  }
  \draw (3)  edge[] (4);

  \draw (A) edge[](B);
  \draw (B) edge[] (C);
  
  \path[->,draw, thick]
    (C) edge node[near end] {$1-b_1(3)$} (1);
    
    \path[->,draw, thick, bend left=25]
    (C) edge node[near start] {$b_1(3)$} (A);
    
    \path[->,draw, thick, bend right=25]
    (4) edge node[near end, above] {$1-b_0(4)$} (1);
    
    \path[->,draw, thick]
    (4) edge node[near start] {$b_0(4)$} (A);
  
   
\end{tikzpicture}
}\label{fig:chains}
}
\hspace{10mm}
\subfloat[]{{\includegraphics[width=0.31\textwidth]{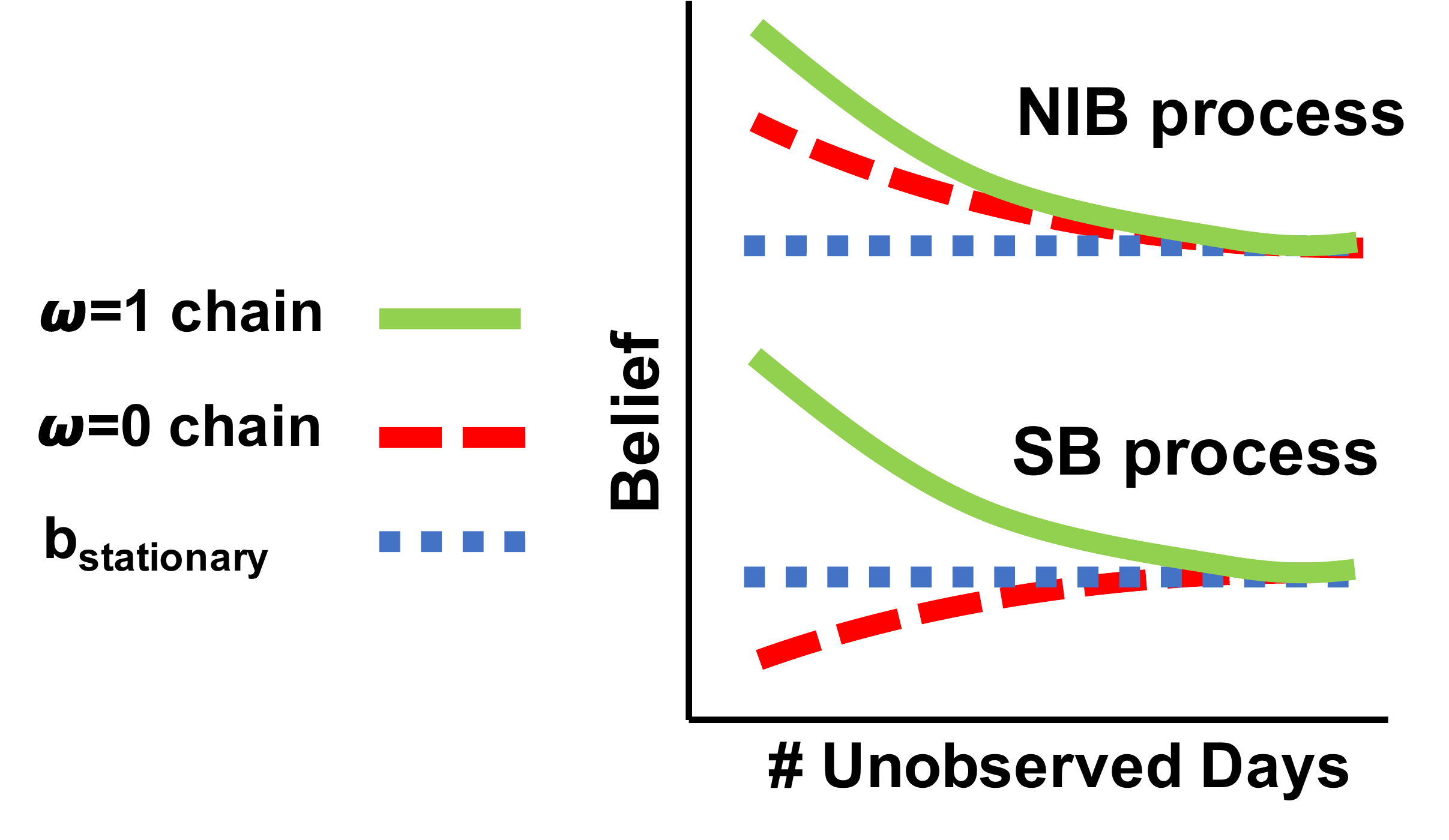}}\label{fig:nib}}%

\caption{(a) Visualization of forward threshold policy ($X_0=4$,$X_1=3$). Black nodes are the head of each chain and grey nodes are the thresholds. (b) Non-increasing belief (NIB) process has non-increasing belief in both chains. A split belief process (SB) has non-increasing belief after being observed in state $1$, but non-decreasing belief after being observed in state $0$.}%
\end{figure}

\paragraph{Sequential index computation algorithm} Alg.~\ref{alg:algo1} precomputes the Whittle index of every belief state for each process and has time complexity $\mathcal{O}(|\mathcal{S}|^2T)$ per process.
It is optimized for settings in which the Whittle index can be precomputed. 
However, for online learning settings, we give an alternative method in Appendix F 
that computes the Whittle index on-demand, in a closed form.

\begin{algorithm}[h!]
\SetAlgoLined
  Initialize counters to heads of the chains: $X_1 = 1$, $X_0 = 1$ \\
 \While{$X_1 < T$ or $X_0 < T$}
 {
    Compute $ m_1 := m$ such that $  J_{m}^{(X_0,X_1)}=J_{m}^{(X_0,X_1+1)}$ \\
    Compute $m_0 := m$ such that $
    J_{m}^{(X_0,X_1)}=J_{m}^{(X_0+1,X_1)}$ \\
    Set $i=\arg\min\{m_0, m_1\}$ and $W(X_i) = \min \{m_0, m_1 \}$ \\
    Increment $X_i$
   }
\caption{Sequential index computation algorithm \label{alg:algo1}}
\end{algorithm}
Our algorithm also requires that belief is decreasing in $X_0$ and $X_1$. Formally, we require:
\begin{definition}[Non-increasing belief (NIB) processes]
A process has \emph{non-increasing belief} if, for any $u \in [T]$ and for any $\omega \in \mathcal{S}$, $b_\omega(u)\ge b_\omega(u+1)$.
\label{def:NIB}
\end{definition}
All possible CoB belief trends are shown in Fig.~\ref{fig:nib} (full derivation omitted for space). We make this distinction because the computation of the Whittle index in Alg.~\ref{alg:algo1} is guaranteed to be exact for NIB processes that are also forward threshold optimal, though we show empirically that our approach works surprisingly well for most distributions. In the next section, we analyze the possible forms of optimal policies to find conditions under which threshold policies are optimal.

\paragraph{Types of Optimal Policies}
\begin{wrapfigure}{r}{0.5\textwidth}
\centering
\includegraphics[width=0.40\textwidth]{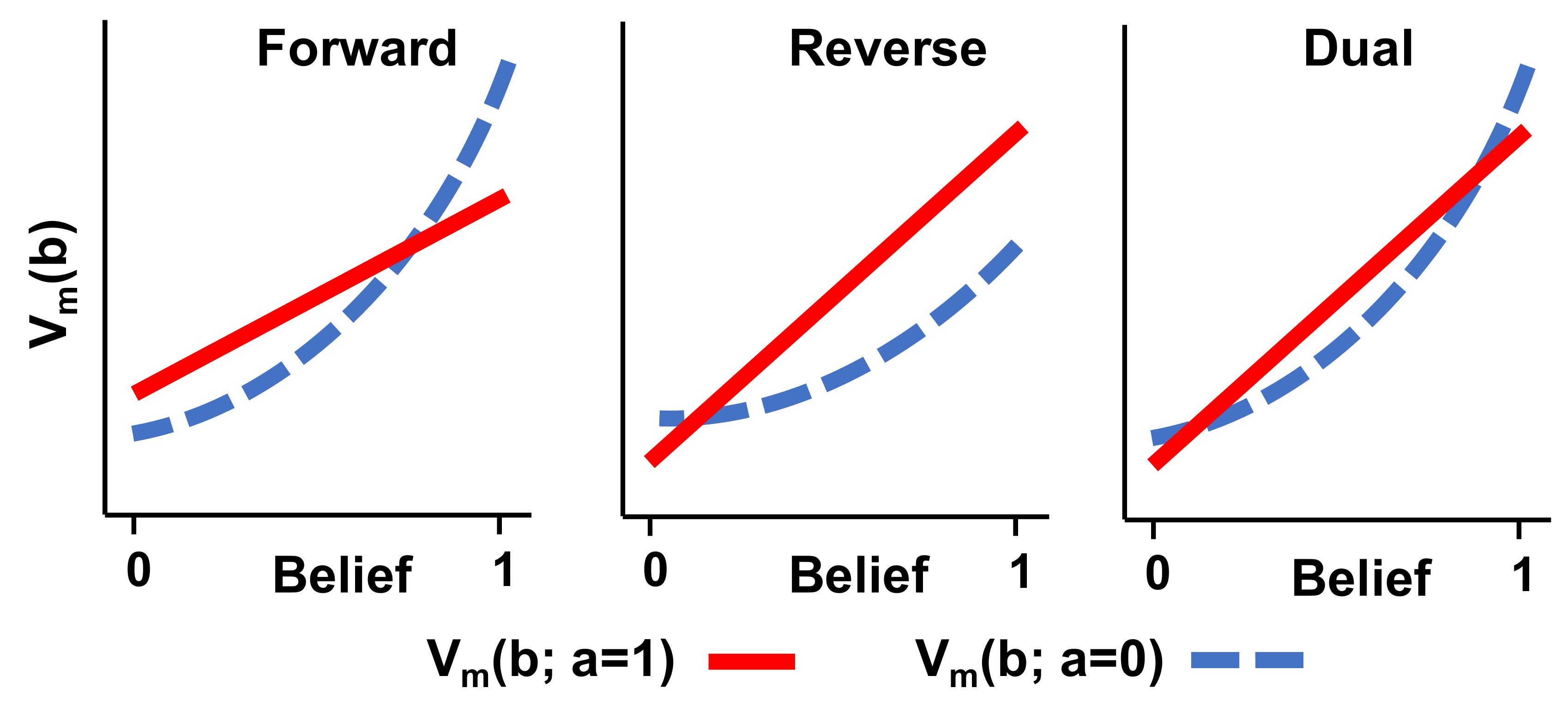}
\caption{Components of $V_m(b)$ in Eq.~\ref{eq:discounted_value_fn_definition}. Since the passive action is convex in $b$, active action is linear in $b$, and value function is a max over these, at most three optimal policy types are possible.}%
\label{fig:value_func_forms}
\end{wrapfigure}
Analyzing Eq.~\ref{eq:discounted_value_fn_definition} reveals that at most three types of optimal policies exist. This follows directly from the definition of $V_m(b)$, which is a max over the passive action value function and the active action value function. The former is convex in $b$, a well-known POMDP result \cite{sondik1978optimal}, and the latter is linear in $b$. Thus, as shown in Fig.~\ref{fig:value_func_forms}, there are three ways in which the value functions of each action may intersect; this defines three optimal policy forms of \textit{forward}, \textit{reverse} and \textit{dual} threshold types, respectively.
Forward and reverse threshold policies are defined in Def.~\ref{def:threshold_pols}; dual threshold policies are active between two separate threshold points and passive elsewhere. Not only do threshold policies greatly reduce the optimization search space, they often admit closed form expressions for the index as demonstrated earlier in this section. We now derive sufficient conditions on the state transition probabilities under which each type of policy is verifiably optimal.

\begin{theorem}\label{thm:forward_threshold_opt}
Consider a belief-state MDP corresponding to an arm in a Collapsing Bandit. For any subsidy $m$, there is a \emph{forward} threshold policy that is optimal under the condition:
\begin{align}
    \label{eq:final_forward_threshold_condition}
    (P_{1,1}^p-P_{0,1}^p)(1+\beta(P_{1,1}^a-P_{0,1}^a))(1-\beta) \geq P_{1,1}^a-P_{0,1}^a
\end{align}
\end{theorem}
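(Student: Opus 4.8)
The plan is to work with the discounted value function in Eq.~\ref{eq:discounted_value_fn_definition} and reduce ``forward threshold optimality'' to a single slope inequality. Write $Q_P(b)=m+b+\beta V_m(\tau_1(b))$ and $Q_A(b)=b+\beta(bV_m(P_{1,1}^a)+(1-b)V_m(P_{0,1}^a))$ for the passive and active values, so that $V_m=\max(Q_P,Q_A)$. As observed in the discussion of Fig.~\ref{fig:value_func_forms}, $Q_P$ is convex and $Q_A$ is linear, hence the gap $D(b):=Q_P(b)-Q_A(b)$ is convex. The optimal policy is passive exactly on $\{b:D(b)\ge 0\}$; since $\{b:D(b)<0\}$ is an interval by convexity, this passive set is an ``up-set'' of the form $[b_{th},1]$ --- i.e.\ a forward threshold policy in the sense of Def.~\ref{def:threshold_pols} --- precisely when the active region, if nonempty, contains the left endpoint. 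A clean sufficient condition for this is that $D$ be nondecreasing, so I would aim to prove $D'(b)\ge 0$ on $[0,1]$, interpreting $D'$ as a one-sided derivative at the finitely many kinks of the convex $V_m$.

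The second step is to compute $D'$. Since $\tau_1$ is affine with $\tau_1(b)=P_{0,1}^p+\delta_p b$, where $\delta_p:=P_{1,1}^p-P_{0,1}^p$ (from Eq.~\ref{eq:tau_maintext} at $u=1$), and $Q_A$ has constant slope $1+\beta\Delta$ with $\Delta:=V_m(P_{1,1}^a)-V_m(P_{0,1}^a)$, differentiation gives $D'(b)=\beta\bigl(\delta_p\,V_m'(\tau_1(b))-\Delta\bigr)$. Because $V_m$ is convex its derivative is nondecreasing, and $\tau_1$ is increasing, so $D'$ attains its minimum at $b=0$. Thus $D'\ge 0$ everywhere is \emph{equivalent} to the single inequality
\[
\delta_p\,V_m'(P_{0,1}^p)\ \ge\ \Delta \;=\; V_m(P_{1,1}^a)-V_m(P_{0,1}^a),
\]
which is the inequality the rest of the proof must establish.

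The third and central step is to control both sides using only the Bellman recursion and convexity. For the right side, I would write $\Delta=\int_{P_{0,1}^a}^{P_{1,1}^a}V_m'$ and feed in the standard Lipschitz bound $V_m'\le \tfrac{1}{1-\beta}$ (immediate from the reward $r(b)=b$ being $1$-Lipschitz together with the $\beta$-contraction), giving $\Delta\le \tfrac{\delta_a}{1-\beta}$ with $\delta_a:=P_{1,1}^a-P_{0,1}^a$. For the left side, a monotonicity argument shows $V_m$ is nondecreasing with slope at least $1$, whence $\Delta\ge\delta_a$ and the active slope satisfies $1+\beta\Delta\ge 1+\beta\delta_a$; since the active action is optimal at the lowest beliefs, convexity of $V_m$ forces $V_m'(P_{0,1}^p)\ge 1+\beta\delta_a$. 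Substituting both estimates reduces the displayed inequality to $\delta_p(1+\beta\delta_a)\ge \tfrac{\delta_a}{1-\beta}$, which upon clearing the denominator is \emph{exactly} Eq.~\ref{eq:final_forward_threshold_condition}.

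The hard part will be making the slope estimates of the third step rigorous, because they are self-referential: the lower bound on the active slope $1+\beta\Delta$ depends on $\Delta$, which itself depends on $V_m'$, and $V_m$ is only piecewise differentiable. I would resolve this by running the whole argument through value iteration --- proving by induction that each iterate $V_m^{(n)}$ is convex and nondecreasing with its slope trapped in an interval whose endpoints reproduce the above recursion, then passing to the limit via the $\beta$-contraction (uniform convergence $V_m^{(n)}\to V_m$, with one-sided derivatives preserved in the limit). Two edge cases must be dispatched alongside: the degenerate regime of very large subsidy, where the optimal policy is passive everywhere and is trivially a forward threshold (so the low-belief active-optimality used above is not needed), and the fact that the conclusion is required only on the finite set of belief states $b_\omega(u)$ along the two chains, which follows automatically once the continuous gap $D$ is shown to be single-crossing.
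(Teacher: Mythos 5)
Your overall architecture matches the paper's own proof strategy: the paper also reduces forward threshold optimality to showing that the derivative of the passive value exceeds the derivative of the active value in $b$, also exploits the affinity of $\tau_1$ (slope $P_{1,1}^p-P_{0,1}^p$), and also closes the argument by induction (value iteration), deriving an upper bound on differences $V_m(b_1)-V_m(b_2)$ (your $\Delta \le \delta_a/(1-\beta)$) together with a lower bound on $\frac{d V_m}{db}$. However, there is one genuine gap in your third step: the justification of the crucial lower bound $V_m'(P_{0,1}^p)\ge 1+\beta\delta_a$ is circular. You assert that ``the active action is optimal at the lowest beliefs'' and then invoke convexity; but the claim that the active region, when nonempty, is anchored at the lowest beliefs \emph{is} forward threshold optimality --- the statement being proven. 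Since $D$ is convex, the configuration you must exclude is an active interval sitting strictly in the interior of $[0,1]$, i.e.\ a dual threshold policy, and excluding that a priori is exactly the hard part (the paper cannot rule it out in general; that is Conjecture~\ref{conjecture:no_dual_thresh}). Your final paragraph's value-iteration plan does not automatically repair this, because the naive slope recursion for the minimum slope is $l_{n+1}\ge 1+\beta\min(\delta_p,\delta_a)\,l_n$ (the passive branch contributes $1+\beta\delta_p\,(V_m^{(n)})'(\tau_1(b))$, the active branch $1+\beta\Delta^{(n)}\ge 1+\beta\delta_a l_n$), and with $\min(\delta_p,\delta_a)$ unknown you cannot reach $1+\beta\delta_a$.

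The missing observation that closes the gap is that condition Eq.~\ref{eq:final_forward_threshold_condition} itself implies $\delta_a\le\delta_p$: since $(1+\beta\delta_a)(1-\beta)\le 1$, the condition gives $\delta_a\le\delta_p(1+\beta\delta_a)(1-\beta)\le\delta_p$. With this, both branches in the value iteration have slope at least $1+\beta\delta_a l_n$, so $l_{n+1}\ge 1+\beta\delta_a l_n$, which from $l_0=0$ converges to $1/(1-\beta\delta_a)\ge 1+\beta\delta_a$; passing secant (hence one-sided derivative) bounds through the uniform limit of convex iterates then yields $V_m'\ge 1+\beta\delta_a$ at \emph{every} belief, in particular at $P_{0,1}^p$, with no reference to where the active region lies. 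Combined with your (correct) steps one and two and the Lipschitz bound $\Delta\le\delta_a/(1-\beta)$, this gives $D'\ge\beta\bigl(\delta_p(1+\beta\delta_a)-\delta_a/(1-\beta)\bigr)\ge 0$ exactly under Eq.~\ref{eq:final_forward_threshold_condition}, and the proof is complete. So the skeleton and the final algebra are right, and they coincide with the paper's approach, but as written the argument assumes its conclusion at the one point where the work actually happens.
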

\begin{proof}[Proof Sketch]
Forward threshold optimality requires that if the optimal action at a belief $b$ is passive, then it must be so for all $b'>b$. This can be established by requiring that the derivative of the passive action value function is greater than the derivative of the active action value function w.r.t. $b$. The main challenge is to distill this requirement down to measurable quantities so the final condition can be easily verified. We accomplish this by leveraging properties of $\tau(b)$ and using induction to derive both upper and lower bounds on $V_m(b_1)-V_m(b_2) \ \forall \ b_1,b_2\ $ as well as a lower bound on $\frac{d(V_m(b))}{db}$.
\end{proof}
Intuitively, the condition requires that the intervention effect on processes in the ``bad'' state must be large, making $P_{1,1}^a-P_{0,1}^a$ small. Note that \citet{zhao_liyu_paper} consider the case where $P_{1,1}^a = P_{1,1}^p$ and $P_{0,1}^a = P_{0,1}^p$, which makes Eq.~\ref{eq:final_forward_threshold_condition} always true. Thus we generalize their result for threshold optimality.

\begin{theorem}\label{thm:reverse_threshold_opt}
Consider a belief-state MDP corresponding to an arm in a Collapsing Bandit. For any subsidy $m$, there is a \emph{reverse} threshold policy that is optimal under the condition:
\begin{align}
    \label{eq:final_reverse_threshold_condition_maintext}
    (P_{1,1}^p-P_{0,1}^p)\Big(1+\frac{\beta (P_{1,1}^a-P_{0,1}^a) }{1-\beta}\Big)\le P_{1,1}^a-P_{0,1}^a
\end{align}
\end{theorem}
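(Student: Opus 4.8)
The plan is to mirror the argument behind Theorem~\ref{thm:forward_threshold_opt}, interchanging the roles of the two actions. Write $\Delta^a := P_{1,1}^a-P_{0,1}^a$ and $\Delta^p := P_{1,1}^p-P_{0,1}^p$, both positive by the structural assumptions. Viewing $V_m$ as a function of the continuous belief $b\in[0,1]$, I would split it into its two branches: the \emph{active} branch $a(b)=b+\beta\big(bV_m(P_{1,1}^a)+(1-b)V_m(P_{0,1}^a)\big)$, which is linear in $b$ with constant slope $c_a:=1+\beta\big(V_m(P_{1,1}^a)-V_m(P_{0,1}^a)\big)$, and the \emph{passive} branch $p(b)=m+b+\beta V_m(\tau_1(b))$. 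Since $\tau_1$ is affine in $b$ with slope exactly $\Delta^p$ (read off Eq.~\ref{eq:tau_maintext}), the passive slope is $p'(b)=1+\beta\Delta^p\,V_m'(\tau_1(b))$. A reverse threshold policy is optimal once the gap $g(b):=a(b)-p(b)$ is nondecreasing, so that $\{b:g(b)\ge 0\}$ is an \emph{upper} interval (this is also what upgrades a potential dual threshold into a genuine reverse one). Hence it suffices to show $c_a\ge p'(b)$ for every $b$, i.e.\ $V_m(P_{1,1}^a)-V_m(P_{0,1}^a)\ge \Delta^p\,V_m'(\tau_1(b))$.

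To close this inequality I would establish two bounds on the limiting value function, obtained by induction over value iteration and passing to the limit via the discount contraction (the average-reward case following from the value function in \textbf{Fast Whittle Index Computation} by a vanishing-discount argument; convexity and a.e.\ differentiability of $V_m$ come from the standard POMDP result \cite{sondik1978optimal}). The first is a lower bound on the value difference: since each branch has slope at least $1$, induction gives $V_m'\ge 1$, hence $V_m(P_{1,1}^a)-V_m(P_{0,1}^a)\ge \Delta^a$. The second is a uniform upper bound on the derivative, $\sup_b V_m'(b)\le 1+\tfrac{\beta\Delta^a}{1-\beta}$. Here I would use that hypothesis~\eqref{eq:final_reverse_threshold_condition_maintext} already forces $\Delta^p\le\Delta^a$ (the bracketed factor is $\ge 1$), which makes the active branch the steeper one and lets the value-iteration recursion $U^{(k+1)}\le 1+\beta\max(\Delta^a,\Delta^p)\,U^{(k)}=1+\beta\Delta^a U^{(k)}$ converge to $\tfrac{1}{1-\beta\Delta^a}\le 1+\tfrac{\beta\Delta^a}{1-\beta}$.

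Combining the two bounds, for every $b$ we obtain $\Delta^p\,V_m'(\tau_1(b))\le \Delta^p\big(1+\tfrac{\beta\Delta^a}{1-\beta}\big)\le \Delta^a\le V_m(P_{1,1}^a)-V_m(P_{0,1}^a)$, where the middle inequality is precisely condition~\eqref{eq:final_reverse_threshold_condition_maintext}. This gives $c_a\ge p'(b)$ for all $b$, so $g$ is nondecreasing and a reverse threshold policy is optimal, establishing the claim under both the discounted and average-reward criteria.

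I expect the main obstacle to be the uniform derivative bound $\sup_b V_m'(b)\le 1+\tfrac{\beta\Delta^a}{1-\beta}$: it is self-referential through the Bellman recursion, since the passive slope at $b$ depends on $V_m'$ at $\tau_1(b)$ while $c_a$ depends on the very difference $V_m(P_{1,1}^a)-V_m(P_{0,1}^a)$ being bounded, so the induction must control both branches simultaneously and exploit the ordering $\Delta^p\le\Delta^a$ supplied by the hypothesis. Making the limit rigorous---convergence of value iteration, a.e.\ differentiability via convexity, and the extension to the average-reward criterion---is the analytic portion that parallels the ``limit arguments'' underlying Theorems~\ref{thm:indexability_maintext} and~\ref{thm:forward_threshold_opt}.
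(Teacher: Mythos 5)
Your proposal is correct and takes essentially the same route as the paper's proof: a slope comparison of the active and passive Bellman branches, with an induction over value iteration that yields a lower bound on the value difference $V_m(P_{1,1}^a)-V_m(P_{0,1}^a)$ and a uniform upper bound on $\frac{d V_m(b)}{db}$, exactly mirroring (with the roles of the two bounds swapped) the argument the paper sketches for Theorem~\ref{thm:forward_threshold_opt} and carries out in its appendix for the reverse case. The remaining technicalities you flag---one-sided derivatives via convexity, convergence of value iteration, and the vanishing-discount extension to average reward---are handled the same way in the paper.
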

Intuitively, the condition requires small intervention effect on processes in the ``bad'' state, the opposite of the forward threshold optimal requirement. Note that both Thm.~\ref{thm:forward_threshold_opt} and Thm.~\ref{thm:reverse_threshold_opt} also serve as conditions for the average reward case as $\beta\rightarrow{}1$ (a proof based on Dutta's Theorem \cite{dutta1991discounted} is given in Appendix D).

\begin{conjecture}\label{conjecture:no_dual_thresh}
Dual threshold policies are never optimal for Collapsing Bandits.
\end{conjecture}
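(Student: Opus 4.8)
Write $Q_p(b)=m+b+\beta V_m(\tau_1(b))$ and $Q_a(b)=b+\beta\big(bV_m(P_{1,1}^a)+(1-b)V_m(P_{0,1}^a)\big)$ for the passive and active branches of Eq.~\ref{eq:discounted_value_fn_definition}, and set $D(b)=Q_a(b)-Q_p(b)$, so the optimal action at belief $b$ is active exactly when $D(b)>0$. The plan is to reduce the conjecture to a statement about the \emph{shape} of $D$ and then attack that statement with slope estimates. As already observed in the discussion of Fig.~\ref{fig:value_func_forms}, $V_m$ is convex (the passive branch is a convex POMDP value function composed with the affine map $\tau_1$, and the active branch is affine in $b$), so $D$ is affine minus convex, hence concave. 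We would additionally establish a monotonicity lemma — using structure (i), $P_{0,1}^p<P_{1,1}^p$ and $P_{0,1}^a<P_{1,1}^a$, through a standard monotone-MDP coupling — that $V_m$ is non-decreasing in $b$. Concavity of $D$ makes the active region $\{b\in\mathcal{B}:D(b)>0\}$ an interval, and a dual threshold policy is \emph{precisely} the case in which this interval is strictly interior to the reachable belief range, i.e.\ $D\le 0$ at both the smallest and the largest reachable belief while $D>0$ in between. It therefore suffices to show that $D$ attains its maximum over $\mathcal{B}$ at one of the two extreme beliefs.

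To that end we differentiate. With $\delta_p:=P_{1,1}^p-P_{0,1}^p$, $\delta_a:=P_{1,1}^a-P_{0,1}^a$, $\Delta_a:=V_m(P_{1,1}^a)-V_m(P_{0,1}^a)$, and $\tau_1(b)=P_{0,1}^p+b\,\delta_p$ from Eq.~\ref{eq:tau_maintext}, one obtains
\begin{align}
D'(b)=\beta\big(\Delta_a-\delta_p\,V_m'(\tau_1(b))\big).
\end{align}
Since $V_m$ is convex, $V_m'$ is non-decreasing and $\tau_1$ is increasing, so $D'$ is non-increasing; hence $D$ has an \emph{interior} maximizer in $\mathcal{B}$ if and only if $\Delta_a/\delta_p$ lies strictly between $V_m'(\tau_1(b_{\min}))$ and $V_m'(\tau_1(b_{\max}))$, and our goal is to exclude this. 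Writing $\Delta_a=\delta_a\,V_m'(\xi)$ for some $\xi\in(P_{0,1}^a,P_{1,1}^a)$ by the mean value theorem, the question becomes a comparison between $V_m'(\xi)$, scaled by $\delta_a/\delta_p$, and $V_m'$ at the $\tau_1$-images of the extreme beliefs. Here we would use structure (ii), $P_{0,1}^a>P_{0,1}^p$ and $P_{1,1}^a>P_{1,1}^p$, together with the contraction of $\tau_1$ toward the passive fixed point $b^*=P_{0,1}^p/(1+P_{0,1}^p-P_{1,1}^p)$: a pointwise Jensen estimate shows the active continuation dominates the passive one, i.e.\ $D(b)=-m+\beta g(b)$ with $g(b)\ge 0$, which we would leverage jointly with convexity to constrain where $D$ can change sign and to tie the relevant values of $V_m'$ back to the chain heads $P_{0,1}^a,P_{1,1}^a$.

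The crux, and the reason this is stated only as a conjecture, is that the argument cannot be closed by convex geometry alone: $V_m$ is the fixed-point value function of the very policy whose structure we are trying to constrain, so the shape of $D$, the slopes $V_m'(\tau_1(b))$, and even the identities of the extreme beliefs $b_{\min},b_{\max}$ are all coupled through the Bellman equation in Eq.~\ref{eq:discounted_value_fn_definition}. One must therefore combine the slope comparison above with a self-consistency argument at the chain heads $P_{0,1}^a,P_{1,1}^a$ — which are exactly the beliefs appearing inside $Q_a$ and, in the non-increasing-belief regime, bracket the extreme reachable beliefs — and propagate it through an induction on the horizon, as in the proofs of Thm.~\ref{thm:forward_threshold_opt} and Thm.~\ref{thm:reverse_threshold_opt}, to obtain two-sided bounds on $V_m'$ that are uniform in the subsidy $m$. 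Making those bounds tight enough to force the interior critical point of $D$ outside $\mathcal{B}$ for every admissible $(P^a,P^p,\beta,m)$, and separately handling the split-belief regime in which $b_{\min}$ and $b_{\max}$ no longer coincide with the chain heads (cf.\ Fig.~\ref{fig:nib}), is the main obstacle we have been unable to surmount.
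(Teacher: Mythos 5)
The statement you are addressing is Conjecture~\ref{conjecture:no_dual_thresh}, and the first thing to be clear about is that the paper offers \emph{no proof} of it: the authors explicitly state that it is supported only by numerical simulation over random transition probabilities, discount factors, and subsidies, and that its proof remains an open problem. So there is no paper argument to compare against; the only question is whether your proposal closes the problem, and by your own admission it does not. The parts of your sketch that are sound --- concavity of $D(b)=Q_a(b)-Q_p(b)$ because the active branch of Eq.~\ref{eq:discounted_value_fn_definition} is affine in $b$ and the passive branch is convex, the consequent fact that the active region $\{D>0\}$ is an interval, the reduction of ``no dual threshold'' to ``$D$ attains its maximum at an extreme belief,'' and the identity $D'(b)=\beta\big(\Delta_a-\delta_p V_m'(\tau_1(b))\big)$ --- are correct (modulo subgradient care, since $V_m$ is convex but not necessarily differentiable), but they amount to a quantitative restatement of the paper's own Fig.~\ref{fig:value_func_forms} observation that only three policy shapes are geometrically possible. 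They delimit the conjecture; they do not prove it.

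The genuine gap is precisely where you stop. Ruling out an interior maximizer of the concave function $D$ requires showing that $\Delta_a/\delta_p$ never lies strictly between the slopes $V_m'(\tau_1(b_{\min}))$ and $V_m'(\tau_1(b_{\max}))$ whenever $D\le 0$ at both extremes, and every quantity in that comparison --- $\Delta_a$, the slopes of $V_m$, even which beliefs are extreme --- is determined by the fixed point of the Bellman operator whose greedy policy is the very object being constrained. The induction-with-bounds strategy you invoke from Thm.~\ref{thm:forward_threshold_opt} and Thm.~\ref{thm:reverse_threshold_opt} works there only because a \emph{hypothesis on the transition parameters} (Eq.~\ref{eq:final_forward_threshold_condition} or Eq.~\ref{eq:final_reverse_threshold_condition_maintext}) supplies a one-sided slope bound that survives every step of value iteration; the conjecture lives exactly in the parameter region where neither hypothesis holds, so no such bound is available, and your Jensen and monotonicity estimates ($g\ge 0$, $V_m$ non-decreasing) constrain the sign and level of $D$ rather than the location of its maximum, which is what the argument needs. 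Add to this the split-belief regime of Fig.~\ref{fig:nib}, where the extreme reachable beliefs no longer coincide with the chain heads, and the proposal is best read as a reasonable research plan --- one whose salvageable content is the lemma that dual-threshold optimality forces an interior critical point of $D$ --- but the conjecture remains open after your argument, exactly as it does in the paper.
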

This conjecture is supported by extensive numerical simulations over the random space of state transition probabilities, values of $\beta$, and values of subsidy $m$; its proof remains an open problem. Note that this would imply that all Collapsing Bandits are indexable.

\section{Experimental Evaluation}
We evaluate our algorithm on several domains using both real and synthetic data distributions. We test the following algorithms: 
\noindent\textbf{Threshold Whittle} is the algorithm developed in this paper. \textbf{\citet{qian2016restless}}, a slow, but precise general method for computing the Whittle index, is our main baseline that we improve upon.
\textbf{Random} selects $k$ process to act on at random each round.
\textbf{Myopic} acts on the $k$~processes that maximize the expected  reward at the immediate next time step. 
Formally, at time $t$, this policy picks the $k$ processes with the largest values of $\Delta b_{t} = (b_{t+1}|a=1)-(b_{t+1}|a=0)$.
\textbf{Oracle} fully observes all states and uses \citet{qian2016restless} to calculate Whittle indices. We measure performance in terms of \emph{intervention benefit}, where $0\%$ corresponds to the reward of a policy that is always passive and 100\% corresponds to Oracle. All results are averaged over 50 independent trials. 

\subsection{Real Data: Monitoring Tuberculosis Medication Adherence}

We first test on tuberculosis medication adherence monitoring data, which contains daily adherence information recorded for each real patient in the system, as obtained from \citet{killian2019learning}. The ``good'' and ``bad'' states of the arm (patient) correspond to ``Adhering'' and ``Not Adhering'' to medication, respectively. 
State transition probabilities are estimated from the data. Because this data is noisy and contains only the adherence records and not the intervention (action) information (as the authors state), we perturb the computed average transition matrix by reducing (increasing) $P_{\omega,1}$ by $\delta_1, \delta_2$ ($\delta_3, \delta_4$) to obtain $P_{\omega,1}^p$ ($P_{\omega,1}^a$) for the simulation. Reward is measured as the undiscounted sum of patients (arms) in the adherent state over all rounds, where each trial lasts $T=180$ days (matching the length of first-line TB treatment) with $N$ patients and a budget of $k$ calls per day. 

In Fig.~\ref{fig:runtime-plot}, we plot the runtime in seconds vs the number of patients $N$. Fig.~\ref{fig:tb-performance} compares the intervention benefit for $N=100, 200, 300, 500$ patients and $k=10\% $ of $N$. In the $N=200$ case, the runtimes of a single trial of Qian et al.~and Threshold Whittle index policy are $3708$ seconds and $3$ seconds, respectively, while attaining near-identical intervention benefit. Our algorithm is thus $3$ orders of magnitude faster than the previous state of the art without sacrificing performance.

We next test Threshold Whittle as the resource level~$k$ is varied. Fig.~\ref{fig:kplot} shows the performance in the  $k=5\%N$, $k=10\%N$ and $k=15\%N$ regimes ($N=200$). Threshold Whittle outperforms Myopic and Random by a large margin in these low resource settings.  
We also affirm the robustness of our algorithm to $\delta$, the perturbation parameter used to approximate real-world $P_{\omega,1}^p$ and $P_{\omega,1}^a$ from the data and present the extensive sensitivity analysis in Appendix G.  
Finally, in Appendix F 
we couple our algorithm to a Thompson Sampling-based learning approach and show it performs well in the real-world case where transition probabilities would need to be learned online, supporting the deployability of our work.

\begin{figure}[h!]
\centering
\subfloat[]{{\includegraphics[ width=0.28\textwidth, clip]{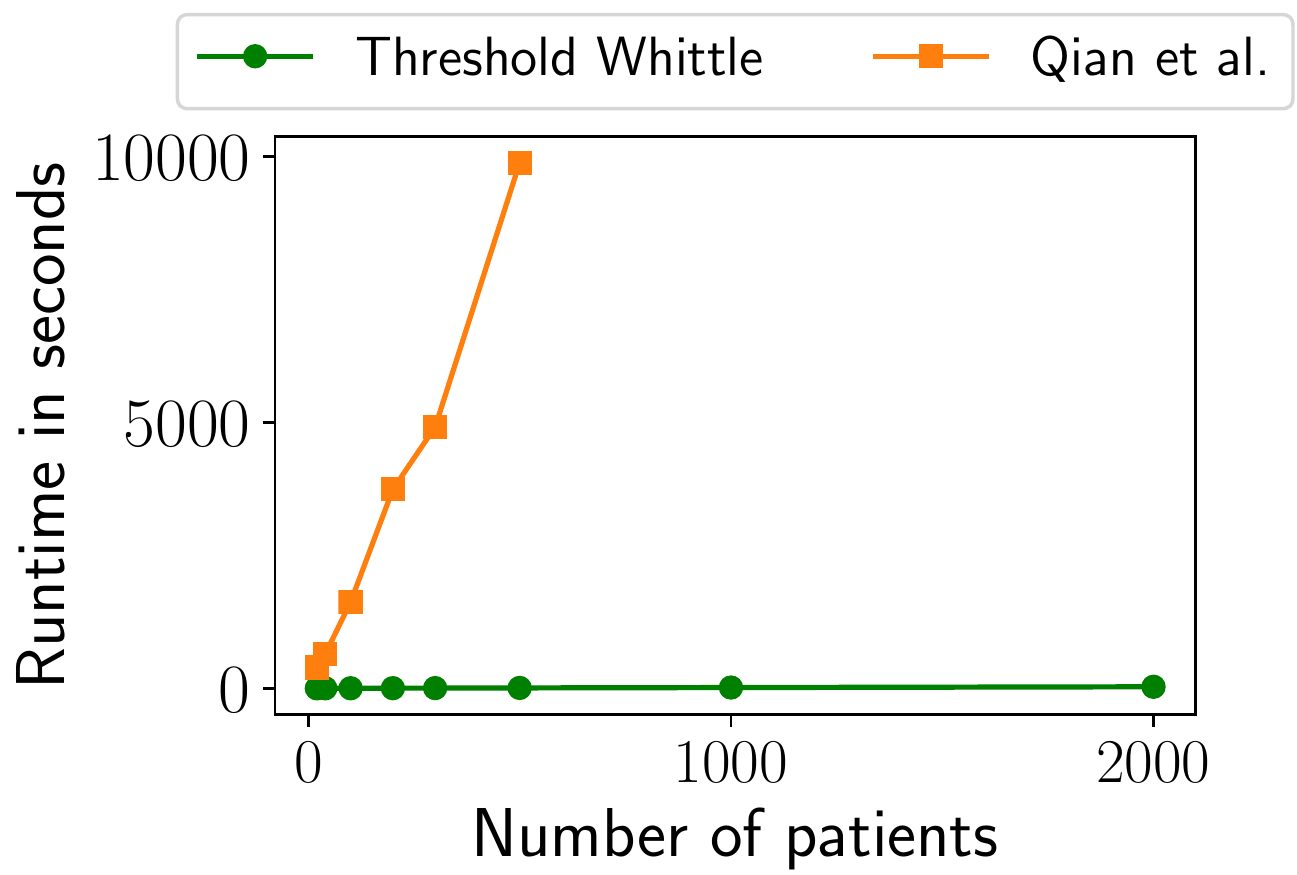}}\label{fig:runtime-plot}}
\hspace{2mm}
\subfloat[]{{\includegraphics[ width=0.35\textwidth, clip]{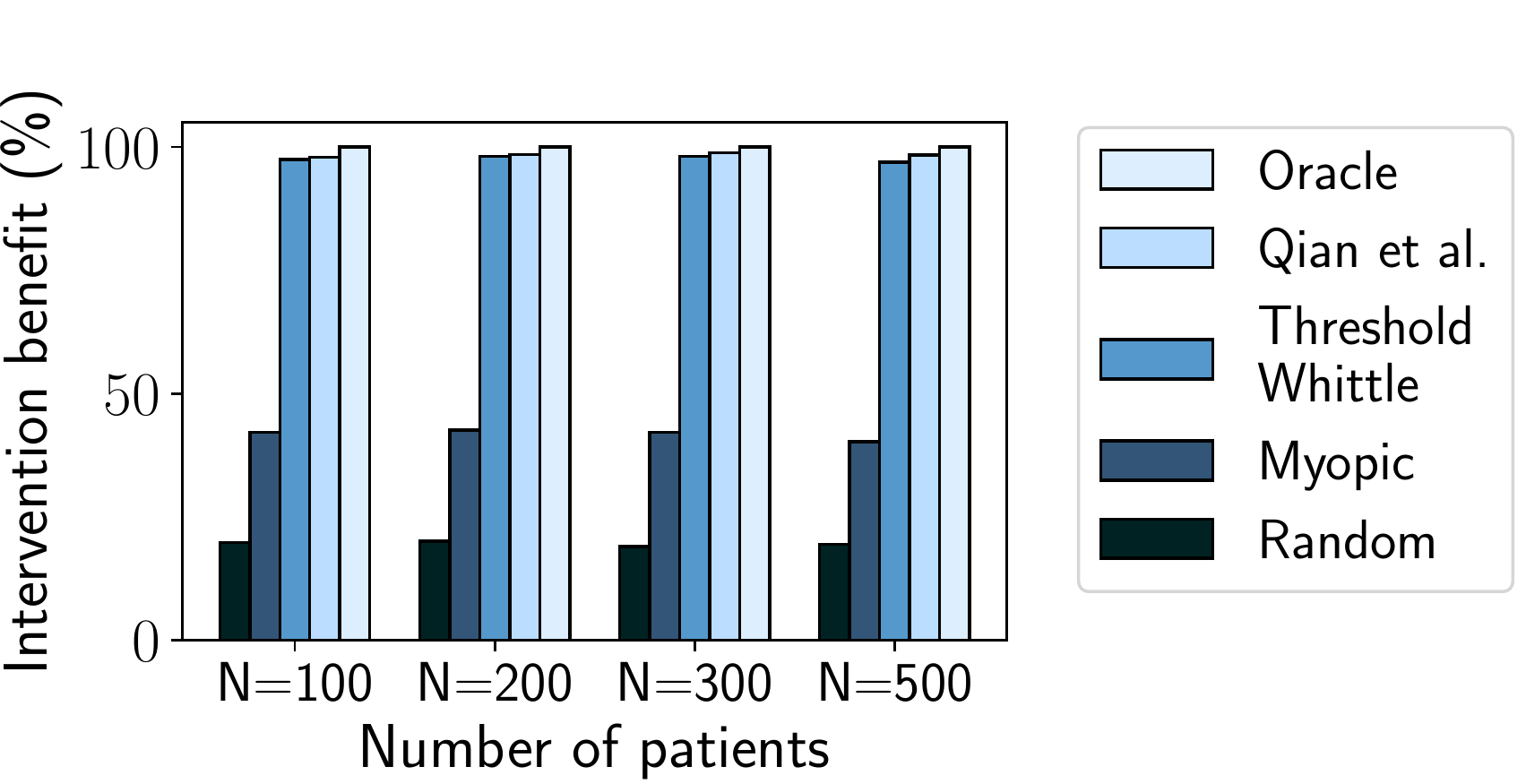}}\label{fig:tb-performance}}%
\hspace{2mm}
\subfloat[]{\includegraphics[width=.27\textwidth]{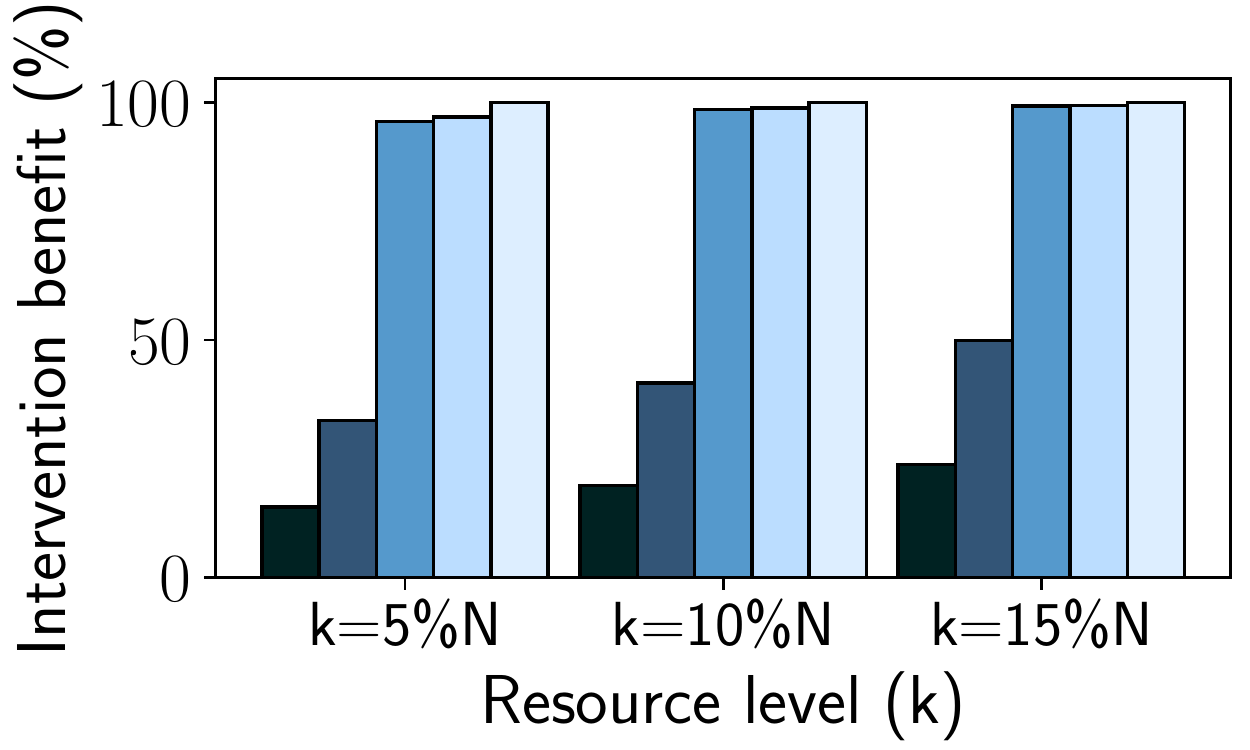}\label{fig:kplot}}
\caption{(a) Threshold Whittle is several orders of magnitude faster than Qian et al.~and scales to thousands of patients without sacrificing performance on realistic data (b). (c) Intervention benefit of Threshold Whittle is far larger than naive baselines and nearly as large as Oracle.}%
\end{figure}

\subsection{Synthetic Domains}

We test our algorithm on four synthetic domains, that potentially characterize other healthcare or relevant domains, and highlight different phenomena. Specifically, we: (i) identify situations when Myopic fails completely while Whittle remains close to optimal, (ii) analyze the effect of latent state entropy on policy performance, (iii) identify limitations of Threshold Whittle by constructing processes for which Threshold Whittle shows separation from Oracle, and (iv) test robustness of our algorithm outside of the theoretically guaranteed conditions. To facilitate comparison with the real data distribution, we simulate trials for $T=180$ rounds where reward is the undiscounted sum of arms in state $1$ over all rounds. We consider the space of transition probabilities satisfying the assumed natural constraints, as outlined in Sec.~\ref{section:problem_formulation}.

Fig.~\ref{fig:synth_results}a demonstrates a domain characterized by processes that are either self-correcting or non-recoverable. Self-correcting processes have a high probability of transitioning from state $0$ to $1$ regardless of the action taken, while non-recoverable processes have a low chance of doing so. We show that when the immediate reward is larger for the former than the latter, Myopic can perform even worse than Random.  That is because a myopic policy always prefers to act on the self-correcting processes per their larger immediate reward, while Threshold Whittle, capable of long-term planning, looks to avoid spending resources on these processes. In this regime, the best long-term plan is to always act on the non-recoverable processes to keep them from failing. Analytical explanation of this phenomenon is presented in Appendix E. 
We set the resource level, $k=10\%N$ in our simulation for Fig.~\ref{fig:synth_results}a. Note that performance of Myopic drops as the fraction of self-correcting processes becomes larger and reaches a minimum at $x=100\%-k=90\%$. Beyond this point, Threshold Whittle can no longer completely avoid self-correcting processes and the gap subsequently starts to decrease. 

Fig.~\ref{fig:synth_results}b explores the effect of uncertainty in the latent state on long-term planning. For each point on the $x$-axis, we draw all transition probabilities according to $P_{\omega,1}^p, P_{\omega,1}^a \sim [x,x+0.1]$. The entropy of the state of a process is maximum near 0.5 making long term planning most uncertain and as a result, this point shows the biggest gap with Oracle, which can observe all the states in each round. Note that Myopic and Whittle policies perform similarly, as expected for (nearly) stochastically identical arms.

Fig.~\ref{fig:synth_results}c studies processes that have a large propensity to transition to state $0$ when passive and a corresponding low active action impact, but a significantly larger active action impact in state $1$. This makes it attractive to exclusively act on processes  in the $1$ state. 
This simulates healthcare domains where a fraction of patients  degrade rapidly, but can recover, and indeed respond very well to interventions if already in a good state.
To simulate these, we draw transition matrices with $P_{0,1}^p, P_{1,1}^p, P_{0,1}^a \sim [0.3,0.32]$ and $ P_{1,1}^a \sim [0.7,0.72]$ 
in varying proportions and sample the rest from the real TB adherence data. Because the best plan is to act on processes in state $1$, both Myopic and Whittle act on the processes with the largest belief giving Oracle a significant advantage as it has perfect knowledge of states. 

Although we provide theoretical guarantees on our algorithm for forward threshold optimal processes with non-increasing belief, Fig.~\ref{fig:synth_results}d reveals that Alg.~\ref{alg:algo1} performs well empirically even with these conditions relaxed. Here, we sample processes uniformly at random from the state transition probability space, and use rejection sampling to vary the proportion of threshold optimal processes. Threshold Whittle performs well even when as few as $20\%$ of the processes are forward threshold optimal; we briefly analyze this phenomenon in Appendix H. 

\begin{figure}[h!]
\includegraphics[ width=.85\textwidth, clip]{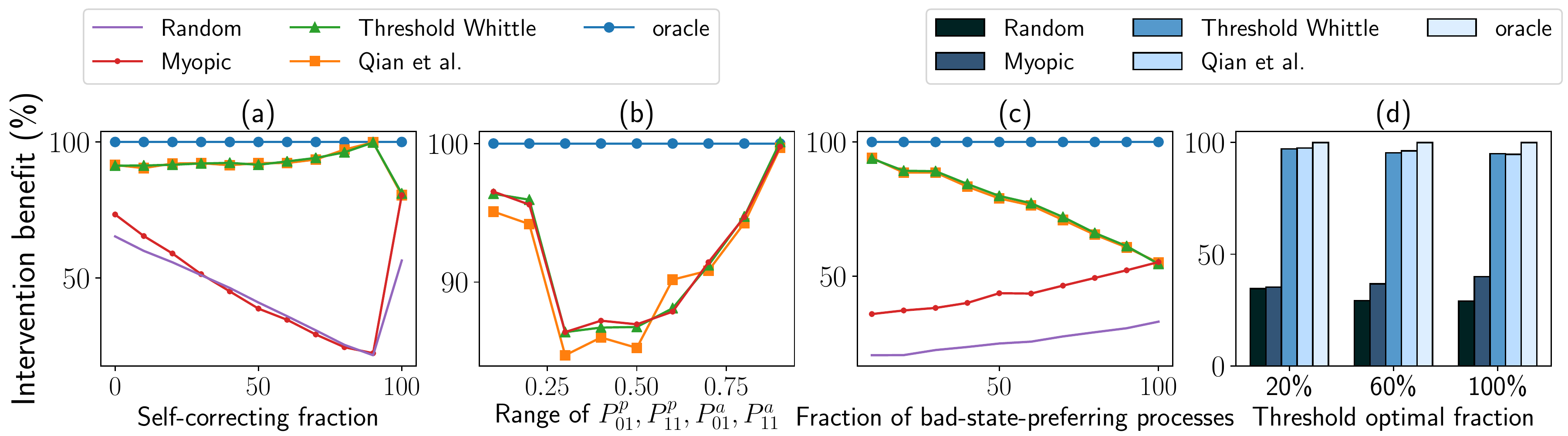}
\centering
\caption{(a) Myopic can be trapped into performing even worse than Random while Threshold Whittle remains close to optimal. (b) Long-term planning is least effective when entropy of states is maximum. (c) Myopic and Whittle planning become similar when more processes are prone to failures. (d) Threshold Whittle is surprisingly robust to processes even outside of theoretically guaranteed conditions.}
\label{fig:synth_results}
\end{figure}

\section{Conclusion}
We open a new subspace of Restless Bandits, \emph{Collapsing Bandits}, which applies to a broad range of real-world problems, especially in healthcare delivery. We give new theoretical results that cover a large portion of real-world data as well as an algorithm that runs thousands of times faster than the state of the art without sacrificing performance. 

\bibliographystyle{plainnat}
\bibliography{main.bib}
\clearpage


\end{document}